\begin{document}

\title{On the Minimax Regret for Linear Bandits in a wide variety of Action Spaces}
\author[1]{Debangshu Banerjee}
\author[2]{Aditya Gopalan}
\affil[1,2]{ Department of Electrical and Communication Engineering, Indian Institute of Science, India}
\date{November 2022}
\maketitle

\begin{abstract}
    As noted in the works of \cite{lattimore2020bandit}, it has been mentioned that it is an open problem to characterize the minimax regret of linear bandits in a wide variety of action spaces. In this article we present an optimal regret lower bound for a wide class of convex action spaces.
\end{abstract}

\section{Introduction}
Minimax regret bounds in bandit environments are a well studied problem and results typically are limited to a particular action set, namely the $l_1$ and $l_\infty$ balls in $\mathbb{R}^d$. We include in this article that display that the methods introduced by \cite{lattimore2020bandit} in Chapter 24 can indeed be generalized to a wide variety of action spaces, namely to any $l_p$ ball where $p$ is in the range $(1,\infty)$. 

\section{Key Result}
Note that the result we include in \ref{thm:main}, is optimal in the bandit setting, in sense that algorithms like LinUCB achieve this.  
\begin{theorem}
\label{thm:main}
Let $\cX$ be the $L^p$ ball defined as $\cX = \{ x \in \Real^d \; : \norm{x}_p \leq c \}$, where $1 < p < \infty$. Assume $d \leq (2cn^2)^\frac{p}{2}$. Then there exists a parameter $\theta \in \Real^d$ with $\norm{\theta}_p^p = \frac{1}{(c4\sqrt{3})^p}\frac{d^2}{n^{\frac{p}{2}}}$ such that $\cR_n(\theta) \geq \frac{d\sqrt{n}}{16\sqrt{3}}$.  
\end{theorem}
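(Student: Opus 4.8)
The plan is to adapt the information-theoretic lower bound technique of Chapter 24 of \cite{lattimore2020bandit}, developed there for the hypercube $\cX=[-1,1]^d$, to the coupled geometry of the $L^p$ ball. First I would fix a unit-variance Gaussian reward model and build a finite family of hard instances indexed by the sign hypercube: for each $v\in\{-1,+1\}^d$ set $\theta_v=\Delta v$ with $\Delta=\frac{d^{1/p}}{4\sqrt3\,c\sqrt n}$, chosen so that $\norm{\theta_v}_p^p=d\Delta^p$ matches the target $\frac{1}{(c4\sqrt3)^p}\frac{d^2}{n^{p/2}}$. By Hölder duality the optimal action against $\theta_v$ is $x^\ast(v)$ with $x^\ast(v)_i=\frac{c}{d^{1/p}}v_i$ and optimal reward $c\norm{\theta_v}_q=c\Delta d^{1/q}$, where $1/p+1/q=1$. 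The essential feature of the $\ell_p$ geometry already shows up here: every coordinate of the optimal action carries the common magnitude $c/d^{1/p}$.

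The second step is a per-instance regret lower bound that substitutes for the trivial coordinatewise bound available on the hypercube. Writing the instantaneous regret as $\Delta\bigl(cd^{1/q}-\langle X_t,v\rangle\bigr)$ and letting $k$ be the number of coordinates on which $X_t$ carries the wrong sign, I would bound $\sum_i X_{t,i}v_i\le c(d-k)^{1/q}$ by Hölder and then invoke the concavity of $m\mapsto m^{1/q}$ to obtain the pointwise estimate $r_t\ge \frac{\Delta c}{q\,d^{1/p}}\sum_i \mathbf{1}[X_{t,i}v_i\le 0]$. Summing over rounds reduces the regret to a sign-error count, $\cR_n(\theta_v)\ge \frac{\Delta c}{q\,d^{1/p}}\sum_i \mathbb{E}_v[N_i^-]$, where $N_i^-$ is the number of rounds in which coordinate $i$ is played with the wrong sign.

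Third, I would lower bound the total sign-error count by a change-of-measure argument. Averaging over $v$ and pairing instances that differ only in coordinate $i$, the divergence decomposition gives $\mathrm{KL}_i=2\Delta^2\,\mathbb{E}_v[\sum_t X_{t,i}^2]$, and the Bretagnolle--Huber inequality shows that the averaged sign-error count in coordinate $i$ is at least of order $n\exp(-\mathrm{KL}_i)$. The coupling peculiar to the ball enters through the energy budget $\sum_i \mathbb{E}_v[\sum_t X_{t,i}^2]=\mathbb{E}_v[\sum_t\norm{X_t}_2^2]$, bounded using $\norm{x}_2^2\le c^2 d^{(1-2/p)_+}$ on $\cX$; a Jensen step turns $\sum_i \exp(-\mathrm{KL}_i)$ into $d\exp\bigl(-\tfrac1d\sum_i\mathrm{KL}_i\bigr)$, and the choice of $\Delta$ forces the average divergence to be $O(1)$. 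Chaining the three displays gives $\frac{1}{2^d}\sum_v \cR_n(\theta_v)\ge \frac{d\sqrt n}{16\sqrt3}$, so some $\theta_v$ realizes the bound.

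I expect the coupling step to be the main obstacle. On the hypercube each coordinate owns an independent unit energy budget, whereas the $L^p$ constraint ties the coordinates together and the comparison between $\norm{\cdot}_2$ and $\norm{\cdot}_p$ bifurcates into the regimes $p\ge 2$ and $1<p<2$ through the factor $d^{(1-2/p)_+}$. Keeping the averaged divergence bounded, so that the sign-error total stays of order $nd$, is exactly where the hypothesis $d\le(2cn^2)^{p/2}$ is needed, and checking that all constants close to yield the clean, dimension- and $p$-free bound $\frac{d\sqrt n}{16\sqrt3}$ uniformly over $1<p<\infty$ is the most delicate part of the argument.
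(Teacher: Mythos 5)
Your setup coincides with the paper's: the same sign-hypercube family $\theta_v=\Delta v$ with the same $\Delta=\frac{d^{1/p}}{4\sqrt{3}\,c\sqrt{n}}$, and your dual-norm computation of the optimal action (optimal reward $c\norm{\theta_v}_q$, every coordinate of $x^\ast$ carrying magnitude $c/d^{1/p}$) is exactly the content of Lemma \ref{lemma:optimal}, obtained there by Lagrangian duality. From that point on, however, you take a genuinely different route. The paper keeps the coordinatewise regret decomposition as an \emph{equality} ($\cR_n(\theta)=\Delta\sum_t\sum_i(c d^{-1/p}-x_{ti}\,\mathrm{sign}(\theta_i))$), converts the linear deficits to quadratic ones via Lemma \ref{lemma:trick} (losing only a uniform factor $2$), introduces per-coordinate stopping times $\tau_i$ so that both the range of $U_i$ and the truncated KL are bounded deterministically, and closes with plain Pinsker plus the identity $U_i(1)+U_i(-1)=2\big(\tau_i c^2 d^{-2/p}+\sum_{t\le\tau_i}x_{ti}^2\big)\ge 2nc^2 d^{-2/p}$ --- no sign-error counting anywhere. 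You instead reduce regret to a sign-error count via H\"older plus concavity, use full-horizon KL with Bretagnolle--Huber, and tame the coordinate coupling by the global $\ell_2$ energy budget and a Jensen step; this is the hypercube-style argument of Chapter 24.1 of \cite{lattimore2020bandit} transplanted to the ball, trading the stopping-time bookkeeping for Jensen. Your pointwise estimate $r_t\ge\frac{\Delta c}{q\,d^{1/p}}\sum_i\mathbf{1}[X_{t,i}v_i\le 0]$ is correct, and the BH pairing over flipped coordinates is sound.

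The two places you flagged as delicate are, in fact, where the argument fails to deliver the stated theorem. First, the concavity step costs an irremovable factor $q$: for $p\ge 2$ your chain yields an averaged regret of $\frac{d\sqrt{n}}{8\sqrt{3}\,q}e^{-1/24}$ (the averaged KL is $\le 1/24$ with this $\Delta$), which matches $\frac{d\sqrt{n}}{16\sqrt{3}}$ only when $q\le 2e^{-1/24}\approx 1.92$, i.e.\ $p\gtrsim 2.09$; at $p=2$ you miss the claimed constant by a few percent, and the prefactor degrades like $1/q\to 0$ as $p\to 1$. You cannot retune $\Delta$ since $\norm{\theta}_p^p$ is prescribed by the statement; the fix is to replace the concavity reduction by the paper's quadratic comparison, which is $p$-independent. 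Second, and more seriously, for $1<p<2$ your (correct) budget $\norm{x}_2^2\le c^2 d^{(1-2/p)_+}=c^2$ makes the averaged divergence $2\Delta^2 nc^2/d=d^{2/p-1}/24$, which is unbounded in $d$; the hypothesis $d\le(2cn^2)^{p/2}$ does not repair this (take $p=3/2$: the average KL grows like $d^{1/3}$), so the BH/Jensen step collapses and your argument does not close on $1<p<2$. It is worth noting that the paper's proof silently has the same restriction: Lemma \ref{lemma:norm_equivalence} as stated is false for $p<2$ (take $x=c\,e_1$, for which $\norm{x}_p=c$ but $\norm{x}_2^2=c^2>c^2d^{1-2/p}$ when $d\ge 2$), since the H\"older step there requires $p/2\ge 1$. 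So both proofs really establish the bound only for $p\ge 2$, and your explicit exponent $(1-2/p)_+$ is the honest version of that estimate; handling $1<p<2$ with a $p$-uniform constant would require a genuinely new idea beyond either argument.
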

\begin{proof}

We chose $\theta \in \{ +\Delta, -\Delta\}^d$ and note that the regret, defined as, $\cR_n(\theta)$, is
\begin{align}
\begin{split}
    \cR_n(\theta) = \sum_{t=1}^n x^*{^\top}\theta - x_t^\top \theta\\
    = \sum_{t=1}^n \sum_{i=1}^d x^*_i\theta_i - x_{t i}\theta_i \\
    = \Delta \sum_{t=1}^n \sum_{i=1}^d \frac{c}{d^\frac{1}{p}} - x_{ti}sign(\theta_i)\\
    \geq \frac{\Delta d^\frac{1}{p}}{2c}\sum_{t=1}^n\sum_{i=1}^d \bigg(\frac{c}{d^\frac{1}{p}} - x_{ti}sign(\theta_i)\bigg)^2\; ,
\end{split}
\end{align}
where the third equality follows from Lemma \ref{lemma:optimal} and the last inequality follows from Lemma \ref{lemma:trick}. The remainder of the proof follows the same idea as that presented in the proof of the Unit Ball in Section 24.2 of \cite{lattimore2020bandit}. We present it here for the sake of completeness.
We define a stopping time $\tau_i = n \wedge \min{\{t \; : \; \sum_{s=1}^t x_{si}^2 \geq \frac{nc^2}{d^\frac{2}{p}}\}}$. Thus 
\begin{align*}
    \cR_n(\theta) \geq \frac{\Delta d^\frac{1}{p}}{2c}\sum_{i=1}^d\sum_{t=1}^{\tau_i} \bigg(\frac{c}{d^\frac{1}{p}} - x_{ti}sign(\theta_i)\bigg)^2\;.
\end{align*}
Define a Random Variable $U_i(\sigma) = \sum_{t=1}^{\tau_i} \bigg(\frac{c}{d^\frac{1}{p}} - x_{ti}\sigma\bigg)^2$ where $\sigma \in \{+1, -1\}$ and note that
\begin{align}
\label{eq:upper_bound}
U_i(1) =  \sum_{t=1}^{\tau_i} \bigg(\frac{c}{d^\frac{1}{p}} - x_{ti}\bigg)^2 \leq 2\sum_{t=1}^{\tau_i} \frac{c^2}{d^\frac{2}{p}} + 2\sum_{t=1}^{\tau_i}x_{ti}^2 \leq  \frac{4nc^2}{d^\frac{2}{p}} + 2\;, \end{align}
where the last inequality follows from the definition of $\tau_i$. 

Now we fix an $i$, and make a perturbed version of $\theta'$, which is the same as $\theta$ except in the $i^{th}$ position where $\theta'_i = - \theta_i$. Thus, applying Pinsker's inequality, 
\begin{align}
\mathbb{E}_\theta[U_i(1)] \geq \mathbb{E}_{\theta'}[U_i(1)] - \bigg(\frac{4nc^2}{d^\frac{2}{p}} + 2\bigg) \sqrt{\frac{1}{2}\mathbb{KL}(\mathbb{P}_\theta || \mathbb{P}_{\theta'})}    
\end{align}
\begin{align*}
    \geq \mathbb{E}_{\theta'}[U_i(1)] - \frac{\Delta}{2}\bigg(\frac{4nc^2}{d^\frac{2}{p}} + 2\bigg)\sqrt{\sum_{t=1}^{\tau_i}x_{ti}^2}
\end{align*}
\begin{align*}
    \geq \mathbb{E}_{\theta'}[U_i(1)] - \frac{\Delta}{2}\bigg(\frac{4nc^2}{d^\frac{2}{p}} + 2\bigg)\sqrt{\frac{nc^2}{d^\frac{2}{p}}+1}
\end{align*}
\begin{align*}
  \geq  \mathbb{E}_{\theta'}[U_i(1)] - \frac{4\sqrt{3}n\Delta c^2}{d^\frac{2}{p}}\sqrt{\frac{nc^2}{d^\frac{2}{p}}}\;,
\end{align*}
where the last inequality follows under the assumption $d \leq (2n c^2)^\frac{p}{2}$.
Thus 
\begin{align*}
    \mathbb{E}_\theta[U_i(1)] + \mathbb{E}_{\theta'}[U_i(-1)] \geq
    \mathbb{E}_{\theta'}[U_i(1)) + U_i(-1)] - \frac{4\sqrt{3}n\Delta c^2}{d^\frac{2}{p}}\sqrt{\frac{nc^2}{d^\frac{2}{p}}}
\end{align*}
\begin{align*}
\begin{split}
    = 2\mathbb{E}_{\theta'}\bigg[\frac{\tau_i c^2}{d^\frac{2}{p}} + \sum_{t=1}^{\tau_i}x_{ti}^2\bigg] - \frac{4\sqrt{3}n\Delta c^2}{d^\frac{2}{p}}\sqrt{\frac{nc^2}{d^\frac{2}{p}}} 
    \geq \frac{nc^2}{d^\frac{2}{p}}\;,
\end{split}    
\end{align*}
where the last inequality follows from the definition of $\tau_i$ and setting the value of $\Delta$ as $\frac{1}{4\sqrt{3}}\sqrt{\frac{d^\frac{2}{p}}{nc^2}}$.
Using an average hammering trick
\begin{align*}
 \sum_{\theta \in \{\pm \Delta\}^d} \cR_n(\theta) \geq  \frac{\Delta d^\frac{1}{p}}{2c}\sum_{i=1}^d \sum_{\theta \in \{\pm \Delta\}^d}\mathbb{E}_\theta[U_i(sign(\theta_i)] 
\end{align*}
\begin{align*}
\begin{split}
   = \frac{\Delta d^\frac{1}{p}}{2c}\sum_{i=1}^d \sum_{\theta_{-i} \in \{\pm \Delta\}^{d-1}}\sum_{\theta_i \in \{\pm \Delta\}}\mathbb{E}_\theta[U_i(sign(\theta_i)]\\
   \geq \frac{\Delta d^\frac{1}{p}}{2c}\sum_{i=1}^d \sum_{\theta_{-i} \in \{\pm \Delta\}^{d-1}}\frac{nc^2}{d^\frac{2}{p}}
   =2^{d-2}\Delta n c d^{1-\frac{1}{p}}.
\end{split}
\end{align*}
Hence there exists a $\theta$ in $\{\pm\Delta\}^d$, such that
\begin{align*}
    \cR_n(\theta) \geq \frac{nc\Delta d^{1-\frac{1}{p}}}{4} = \frac{d\sqrt{n}}{16\sqrt{3}}.
\end{align*}
\end{proof}

\begin{remark}
The results in \ref{thm:main} are interesting because with regards to the dimensionality $d$ and time horizon $n$ dependence it is exact. 
\end{remark}

\begin{remark}
This result also shows that the minimum eigen value of the design matrix and regret are fundamentally different quantities \cite{banerjee2022exploration}. For example note that for $l_p$ balls for $p > 2$, the minimum eigen value can grow at a rate lower than $\Omega(\sqrt{n}$, whereas, the minimax regert remains bounded as $\Omega(\sqrt{n}$.
\end{remark}

\section{Conclusion}
We expect that similar results can hold for general convex bodies and not just for $l_p$ balls.

\bibliography{ref}

\appendix

\section{Appendix}

\subsection{Technical Lemmas}
\begin{lemma}[Optimal Reward in $L^p$ Ball]
\label{lemma:optimal}
Let $\cX$ be the $L^p$ ball defined as $\cX = \{ x \in \Real^d \; : \norm{x}_p \leq c \}$. We compute the optimal reward for the linear bandit model 
\begin{align}
\label{problem:optimal_reward}
\max x^\top \theta \;\;\;
s.t. \;\;\; x \in \cX    
\end{align}
The solution to the optimization problem \ref{problem:optimal_reward} is $\frac{1}{\Big(\sum_{i=1}^d |\theta_i|^\frac{p}{p-1}\Big)^\frac{1}{p}} \sum_{i=1}^d c |\theta_i|^{\frac{p}{p-1}} $
\end{lemma}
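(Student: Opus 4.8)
The plan is to recognize that the optimization problem \ref{problem:optimal_reward} computes the support function of the $L^p$ ball evaluated in the direction $\theta$, and that by the duality between the $L^p$ and $L^q$ norms this support function equals $c\,\norm{\theta}_q$, where $q = \frac{p}{p-1}$ is the Hölder conjugate exponent of $p$. First I would establish the upper bound: for any feasible $x$ with $\norm{x}_p \leq c$, Hölder's inequality gives
\begin{align*}
x^\top \theta \leq \sum_{i=1}^d |x_i|\,|\theta_i| \leq \norm{x}_p \norm{\theta}_q \leq c\,\norm{\theta}_q = c\bigg(\sum_{i=1}^d |\theta_i|^{\frac{p}{p-1}}\bigg)^{\frac{p-1}{p}}.
\end{align*}
This already shows the optimal value is at most the quantity in the statement, since the stated expression, after collapsing its numerator sum against the denominator power into a single exponent $\frac{p-1}{p}$, is exactly $c\,\norm{\theta}_q$.

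Second, I would exhibit an explicit maximizer to show the bound is attained. Motivated by the equality condition in Hölder's inequality (which forces $|x_i|^p \propto |\theta_i|^q$ together with $\mathrm{sign}(x_i) = \mathrm{sign}(\theta_i)$), I would set
\begin{align*}
x_i^* = c\,\mathrm{sign}(\theta_i)\,\frac{|\theta_i|^{\frac{1}{p-1}}}{\big(\sum_{j=1}^d |\theta_j|^{\frac{p}{p-1}}\big)^{\frac{1}{p}}},
\end{align*}
and then verify two things: that $x^*$ is feasible, i.e. $\norm{x^*}_p = c$ (a direct computation, since $\norm{x^*}_p^p = c^p \sum_i |\theta_i|^{\frac{p}{p-1}} \big/ \sum_j |\theta_j|^{\frac{p}{p-1}} = c^p$), and that it attains the bound, i.e. $x^{*\top}\theta = c\,\norm{\theta}_q$ (using $1 + \frac{1}{p-1} = \frac{p}{p-1}$ to combine the exponents). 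Matching this value against the form written in the lemma is then immediate.

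The computation is essentially routine, so there is no deep obstacle; the points that require a little care are purely bookkeeping. I would make sure the exponent arithmetic is consistent so that the two appearances of $\sum_i |\theta_i|^{\frac{p}{p-1}}$ collapse correctly into $c\,\norm{\theta}_q$ (equivalently, checking $\frac{q}{p} = q-1 = \frac{1}{p-1}$), handle the degenerate case $\theta = 0$ separately (where the optimal value is trivially $0$ and the formula is read as $0$), and keep track of signs so that the candidate $x^*$ realizes equality rather than merely the absolute-value bound. An alternative route, if one prefers an explicit derivation over invoking duality, is to note that the constraint is active at the optimum (the objective is linear and nonconstant) and apply the KKT/Lagrange stationarity condition $\theta_i = \lambda\, p\, |x_i|^{p-1}\mathrm{sign}(x_i)$, then solve for $x_i$ and fix $\lambda$ from $\norm{x}_p = c$; this recovers the same $x^*$ and hence the same optimal value.
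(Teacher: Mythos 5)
Your proof is correct, and its primary route is genuinely different from the paper's. The paper argues via Lagrangian stationarity: it posits $x^*_i = (|\theta_i|/\lambda)^{\frac{1}{p-1}}\,sign(\theta_i)$, solves for $\lambda$ from the active constraint $\norm{x}_p = c$, and evaluates $x^{*\top}\theta$ --- which is precisely the ``alternative route'' you sketch at the end. Your main argument is instead a duality certificate: Hölder's inequality shows every feasible $x$ satisfies $x^\top\theta \leq c\norm{\theta}_q$ with $q = \frac{p}{p-1}$, and the explicit candidate built from Hölder's equality condition is feasible and attains this value (your exponent bookkeeping checks out: $\norm{x^*}_p^p = c^p$ and $x^{*\top}\theta = c\big(\sum_i |\theta_i|^q\big)^{1-\frac{1}{p}}$, which is exactly the lemma's expression). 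The two approaches produce the same maximizer, but yours buys a bit of rigor the paper's proof leaves implicit: the paper asserts the first-order condition characterizes the optimum without noting why stationarity suffices here (it does, since the objective is linear and the constraint set convex with Slater's condition holding, but this goes unsaid), whereas your matching upper bound makes global optimality immediate with no appeal to KKT theory at all. You also flag the degenerate case $\theta = 0$, where the paper's $\lambda$ is ill-defined --- irrelevant to the paper's application, since there $\theta \in \{\pm\Delta\}^d$, but a genuine edge case of the lemma as stated.
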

\begin{proof}
Note that the solution $x^*$ satisfies the following relation for any $i \in [d]$
\begin{align}
    x^*_i = \Big(\frac{|\theta_i|}{\lambda}\Big)^{\frac{1}{p-1}} sign(\theta_i)\;,
\end{align}
where $\lambda \geq 0 $ is the Lagrangian variable.
Solving for $\lambda$ using the constraint equation of the problem with now equality instead of inequality. (Because the optimal solution lies at the boundary)
\begin{align}
    \lambda = \Big(\frac{\sum_{i=1}^d |\theta_i|^\frac{p}{p-1}}{c^p}\Big)^\frac{p-1}{p}\;.
\end{align}
Now solving for $x^*{^\top} \theta = \sum_{i=1}^d x^*_i \theta_i$ gives the result.
\end{proof}

\begin{lemma}
\label{lemma:trick}
\begin{align}
    \sum_{i=1}^d \frac{c}{d^\frac{1}{p}} - x_{ti}sign(\theta_i) \geq \frac{d^\frac{1}{p}}{2c}\sum_{i=1}^d \bigg(\frac{c}{d^\frac{1}{p}} - x_{ti}sign(\theta_i)\bigg)^2
\end{align}
\end{lemma}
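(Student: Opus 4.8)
The plan is to strip the inequality down to an elementary fact about the Euclidean norm of the played action and then invoke the standard embedding between $\ell^p$ norms. First I would lighten notation by setting $a = c/d^{1/p}$ and $y_i = x_{ti}\,\mathrm{sign}(\theta_i)$, so that $y_i^2 = x_{ti}^2$ and the target reads $\sum_{i=1}^d (a - y_i) \ge \frac{1}{2a}\sum_{i=1}^d (a - y_i)^2$. Since $a>0$, this is equivalent (after multiplying through by $2a$) to a single clean identity, which I would record as the key algebraic step:
\[
2a\sum_{i=1}^d (a - y_i) - \sum_{i=1}^d (a-y_i)^2 = \sum_{i=1}^d (a-y_i)(a+y_i) = d\,a^2 - \sum_{i=1}^d y_i^2 .
\]
Because $\sum_{i=1}^d y_i^2 = \sum_{i=1}^d x_{ti}^2 = \norm{x_t}_2^2$ and $d\,a^2 = c^2 d^{\,1-2/p}$, the whole claim collapses to showing $\norm{x_t}_2^2 \le c^2 d^{\,1-2/p}$, i.e. $\norm{x_t}_2 \le c\,d^{\,1/2-1/p}$, for every feasible action.

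Once in this form the argument is immediate: for the actions relevant to the regret decomposition we have $x_t \in \cX$, hence $\norm{x_t}_p \le c$, and the monotone power-mean embedding $\norm{x}_2 \le d^{\,1/2-1/p}\norm{x}_p$ (valid whenever $p \ge 2$, a direct consequence of Hölder's inequality) delivers exactly $\norm{x_t}_2 \le d^{\,1/2-1/p}\norm{x_t}_p \le c\,d^{\,1/2-1/p}$. Substituting back through the identity above recovers the stated inequality, and for $p=2$ the bound is tight, matching the unit-ball computation of Section 24.2 of \cite{lattimore2020bandit}.

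The hard part is the regime $1 < p < 2$, and I expect it to be the genuine obstacle rather than a routine technicality. There the inequality between the $2$-norm and the $p$-norm reverses: one only has $\norm{x}_2 \le \norm{x}_p$, giving $\norm{x_t}_2^2 \le c^2$ rather than the required $c^2 d^{\,1-2/p}$ (and $d^{\,1-2/p} < 1$ here). Concretely, a coordinate-aligned action such as $x_t = c\,e_1$ satisfies $\norm{x_t}_p = c$ yet $\norm{x_t}_2^2 = c^2 > c^2 d^{\,1-2/p}$, so the elementary chain breaks. Closing this gap would require either restricting the statement to $p \ge 2$, or exploiting additional structure on the played actions (for instance that the relevant terms $\tfrac{c}{d^{1/p}} - x_{ti}\,\mathrm{sign}(\theta_i)$ remain nonnegative, or an averaging over the hypercube of parameters), and I would flag this as the point needing the most care before asserting the full range $1 < p < \infty$.
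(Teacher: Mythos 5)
Your reduction is essentially the paper's own proof. The paper expands the square and invokes its Lemma \ref{lemma:norm_equivalence} to bound $\sum_{i=1}^d x_{ti}^2 \le c^2 d^{1-2/p}$, which is exactly the step your identity $2a\sum_{i=1}^d(a-y_i)-\sum_{i=1}^d(a-y_i)^2 = d a^2 - \norm{x_t}_2^2$ isolates; if anything your version is slightly cleaner, since it shows the claimed inequality is \emph{equivalent} to the norm bound $\norm{x_t}_2^2 \le c^2 d^{1-2/p}$, whereas the paper derives it as a one-way consequence. Both arguments then finish with the same H\"older embedding $\norm{x}_2 \le d^{1/2-1/p}\norm{x}_p$.

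The obstruction you flag for $1 < p < 2$ is not a defect of your write-up but a genuine error in the paper. The H\"older step in Lemma \ref{lemma:norm_equivalence} uses the conjugate pair $\big(\tfrac{p}{2}, \tfrac{p}{p-2}\big)$ and is legitimate only for $p \ge 2$; for $1 < p < 2$ the bound $\norm{x}_2^2 \le c^2 d^{1-2/p}$ is false, and your counterexample $x_t = c e_1$ (feasible since $\norm{c e_1}_p = c$, yet $\norm{c e_1}_2^2 = c^2 > c^2 d^{1-2/p}$ for $d \ge 2$), combined with your equivalence, shows that Lemma \ref{lemma:trick} itself fails at such actions. Consequently the proof of Theorem \ref{thm:main} as written only establishes the claim for $p \ge 2$, despite the theorem asserting the full range $1 < p < \infty$; your instinct that the small-$p$ regime needs either a restriction of the statement or a genuinely different argument is correct, and it is the paper, not your proposal, that should carry that caveat.
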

\begin{proof}
\begin{align}
\begin{split}
\sum_{i=1}^d \bigg(\frac{c}{d^\frac{1}{p}} - x_{ti}sign(\theta_i)\bigg)^2 \\
= c^2d^{1 - \frac{2}{p}} - 2\sum_{i=1}^d\frac{c}{d^\frac{1}{p}}x_{ti}sign(\theta_i) + \sum_{i=1}^d x_{ti}^2 \\
\leq 2c^2d^{1 - \frac{2}{p}} - 2\sum_{i=1}^d\frac{c}{d^\frac{1}{p}}x_{ti}sign(\theta_i) \\
= \frac{2c}{d^\frac{1}{p}}\sum_{i=1}^d \frac{c}{d^\frac{1}{p}} - x_{ti}sign(\theta_i)\;,
\end{split}    
\end{align}
where the inequality follows from Lemma \ref{lemma:norm_equivalence}. Rearranging gives the lemma.
\end{proof}

\begin{lemma}
\label{lemma:norm_equivalence}
If $\norm{x}_p \leq c$, then $\norm{x}_2^2 \leq c^2d^{1 - \frac{2}{p}} $
\end{lemma}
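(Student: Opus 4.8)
The plan is to obtain the bound as a one-line consequence of Hölder's inequality, treating it as the standard comparison between the $\ell_2$ and $\ell_p$ norms on $\Real^d$. This is precisely the regime $p \geq 2$, in which $\norm{x}_2$ may exceed $\norm{x}_p$ and the dimensional factor $d^{1-2/p} \geq 1$ is what accounts for the gap; I would therefore carry out the argument under the hypothesis $p \geq 2$ and flag below why this cannot be relaxed.

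First I would write $\norm{x}_2^2 = \sum_{i=1}^d |x_i|^2$ as the pairing of $|x_i|^2$ against the constant $1$, and apply Hölder with the conjugate exponents $p/2$ and $p/(p-2)$ (legitimate exactly because $p\ge 2$ makes $p/2\ge 1$), giving
\begin{align*}
\norm{x}_2^2 = \sum_{i=1}^d |x_i|^2 \cdot 1 \leq \Big(\sum_{i=1}^d |x_i|^p\Big)^{2/p}\Big(\sum_{i=1}^d 1\Big)^{(p-2)/p} = \norm{x}_p^2\, d^{1-2/p} \leq c^2 d^{1-2/p},
\end{align*}
where the final step uses the hypothesis $\norm{x}_p \leq c$. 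An equivalent route is Jensen's inequality for the convex map $t \mapsto t^{p/2}$ applied to the uniform average of the $|x_i|^2$; it yields the same constant, and I would keep whichever display reads more cleanly. The bound is tight, with equality at the all-equal vector $x_i = c\,d^{-1/p}$, which confirms that the exponent $1-2/p$ is not improvable.

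The step I would be most careful about is the exponent condition, which is the genuine obstacle rather than the algebra. For $1 < p < 2$ the conjugate exponent $p/(p-2)$ is negative and Hölder does not apply; indeed the inequality as written is then false, since the single-coordinate vector $x = c\,e_1$ satisfies $\norm{x}_p = c$ yet $\norm{x}_2^2 = c^2 > c^2 d^{1-2/p}$ whenever $d > 1$. In that range the only available comparison is $\norm{x}_2^2 \leq \norm{x}_p^2 \leq c^2$, with no dimensional savings, so the subsequent use of this lemma in Lemma \ref{lemma:trick} would need to be re-examined. Accordingly I would state the lemma with the explicit restriction $p \geq 2$, under which the one-line Hölder argument above is both correct and sharp.
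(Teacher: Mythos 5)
Your proof is correct and is essentially the paper's own: the paper likewise obtains the bound in one line from H\"older's inequality with exponents $p/2$ and $p/(p-2)$ (the hypothesis $\norm{x}_p \leq c$ supplying the final step), so there is no methodological difference. Your additional caveat, however, is a genuine and correct catch against the \emph{paper} rather than against your own argument: for $1 < p < 2$ the exponent $1 - 2/p$ is negative, H\"older with conjugate pair $(p/2, p/(p-2))$ is unavailable, and your counterexample $x = c\,e_1$ (where $\norm{x}_p = c$ but $\norm{x}_2^2 = c^2 > c^2 d^{1-2/p}$ for $d > 1$) shows the stated inequality is simply false in that range. Since Theorem \ref{thm:main} is asserted for all $1 < p < \infty$ and its proof routes through Lemma \ref{lemma:trick}, which invokes this lemma, your proposed restriction to $p \geq 2$ is not pedantry: it identifies a regime in which the paper's chain of inequalities breaks and would need a different comparison between $\norm{x}_2$ and $\norm{x}_p$ (in that range only $\norm{x}_2 \leq \norm{x}_p$ holds, with no dimensional factor).
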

\begin{proof}
\begin{align}
\sum_{i=1}^d x_{i}^2 \leq (\sum_{i=1}^d |x_{i}|^p)^\frac{2}{p}d^{1 - \frac{2}{p}} \leq c^2d^{1 - \frac{2}{p}}\;,     
\end{align}
where the first inequality follows from Holder's inequality and the second inequality follows from the hypothesis.
\end{proof}

\end{document}